\newcommand{\mye}[1]{\ensuremath{\mathit{#1}}}
\newcommand{\kb}{\mye{K}}
\newcommand{\true}{\mye{true}}
\newcommand{\naf}{\mye{not}}
\newcommand{\predicate}[1]{\textit{#1}}
\newcommand{\constant}[1]{\textsf{#1}}
\newcommand{\policy}{\mye{policy}}
\newcommand{\prior}{\mye{prior}}
\title{Confidentiality-Preserving Data Publishing for Credulous Users by Extended Abduction}
\author{Katsumi Inoue\inst{1}, Chiaki Sakama\inst{2} and Lena Wiese\inst{1}\thanks{Lena Wiese gratefully acknowledges a postdoctoral research grant of the German Academic Exchange Service (DAAD).}}
\institute{National Institute of Informatics\\2-1-2 Hitotsubashi, Chiyoda-ku, Tokyo 101-8430, Japan\\\texttt{\{ki|wiese\}@nii.ac.jp}
\and
Department of Computer and Communication Sciences 
    Wakayama University\\930 Sakaedani, Wakayama 640-8510, Japan\\\texttt{sakama@sys.wakayama-u.ac.jp}}
\begin{document}

\maketitle

\begin{abstract}
Publishing private data on external servers incurs the problem of how to avoid unwanted disclosure of confidential data.
We study a problem of confidentiality in extended disjunctive logic programs and show how it can be solved by extended abduction. 
In particular, we analyze how credulous non-monotonic reasoning affects confidentiality.
\end{abstract}%
\textbf{Keywords:} Data publishing, confidentiality, privacy, extended abduction, answer set programming, negation as failure, non-monotonic reasoning

\section{Introduction}

Confidentiality of data (also called privacy or secrecy in some contexts) is a major security goal.
Releasing data to a querying user without disclosing confidential information has long been investigated in areas like 
access control, $k$-anonymity, inference control, and data fragmentation. 
Such approaches prevent disclosure according to some security policy by restricting data access (denial, refusal), by modifying some data (perturbation, noise addition, cover stories, lying, weakening), or by breaking sensitive associations (fragmentation).
Several approaches (like \cite{bonatti1995foundations,cuencagrau2008privacy-preserving,stouppa2009data,toland2010inferenceproblem,biskup2010usability,wiese2010horizontal}) 
employ logic-based mechanisms to ensure data confidentiality.
In particular, \cite{dix2005relationship} use brave reasoning in default logic theories to solve a privacy problem in a classical database (a set of ground facts).
For a non-classical knowledge base (where negation as failure \naf{} is allowed) \cite{zhao2010usingasp} study correctness of access rights. Confidentiality of predicates in collaborative multi-agent abduction is a topic in \cite{ma2011multiagent}.

In this article we analyze \textbf{confidentiality-preserving data publishing} in a knowledge base setting: data as well as integrity constraints or deduction rules are represented as logical formulas.
If such a knowledge base is released to the public for general querying (e.g., microcensus data) or outsourced to a storage provider (e.g., database-as-a-service in  cloud computing), confidential data could be disclosed. We assume that users accessing the published knowledge base use a form of credulous (also called brave) reasoning to retrieve data from it; users also possess some invariant ``a priori knowledge'' that can be applied to these data to deduce further information. On the knowledge base side, a confidentiality policy specifies which is the confidential information that must never be disclosed.
This paper is one of only few papers (see \cite{sakama2011dishonestreasoning,zhao2010usingasp,ma2011multiagent}) covering confidentiality for logic programs.  This formalism however has relevance in multi-agent communications where agent knowledge is modeled by logic programs. 
With \textbf{extended abduction} (\cite{sakama2003abductive}) we obtain a ``secure version'' of the knowledge base that can safely be published even when a priori knowledge is applied.  We show that computing the secure version for a credulous user corresponds to finding a skeptical anti-explanation for all the elements of the confidentiality policy.
Extended abduction has been used in different applications like for example providing a logical framework for dishonest reasoning \cite{sakama2011dishonestreasoning}. It can be solved by computing the answer sets of an update program (see \cite{sakama2003abductive}); thus an implementation of extended abduction can profit from current answer set programming (ASP) solvers \cite{calimeri2011aspcompetition}.
To retrieve the confidentiality-preserving knowledge base $K^\mathit{pub}$ from the input knowledge base $K$, the a priori knowledge \prior{} and the confidentiality policy \policy, a row of transformations are applied; the overall approach is depicted in Figure~\ref{fig:confidentialitycredulous}.

In sum, this paper makes the following contributions:
\begin{itemize}
  \item it formalizes confidentiality-preserving data publishing for a user who retrieves data under a credulous query response semantics.
	\item it devises a procedure to securely publish a logic program (with an expressiveness up to extended disjunctive logic programs) respecting a subset-minimal change semantics.
	\item it shows that confidentiality-preservation for credulous users corresponds to finding a skeptical anti-explanation and can be solved by extended abduction.
\end{itemize}

In the remainder of this article, Section~\ref{sec:edps} provides background on extended disjunctive logic programs and answer set semantics; Section~\ref{sec:confpreservingkbs} defines the problem of confidentiality  in data publishing; Section~\ref{sec:extendedabduction} recalls extended abduction and update programs; Section~\ref{sec:confidentialityups} shows how answer sets of update programs correspond to confidentiality-preserving knowledge bases; and Section~\ref{sec:conclusion} gives some discussion and concluding remarks.

\section{EDPs and answer set semantics}\label{sec:edps}
In this article, a knowledge base \kb{} is represented by an \emph{extended disjunctive logic program} (EDP) --  a set of formulas called \emph{rules} of the form:
\begin{displaymath}
 L_1;\mathellipsis;L_l\leftarrow L_{l+1},\mathellipsis,L_m,\naf L_{m+1},\mathellipsis,\naf L_n   ~~~~~(n\geq m\geq l\geq 0)
\end{displaymath}
A rule contains literals $L_i$, disjunction ``;'', conjunction ``,'', negation as failure ``\naf'', and material implication ``$\leftarrow$''.
A literal is a first-order atom or an atom preceded by classical negation ``$\neg$''. $\naf L$ is called a \emph{NAF-literal}.
The disjunction left of the implication $\leftarrow$ is called the \emph{head}, while the conjunction right of $\leftarrow$ is called the \emph{body} of the rule. For a rule $R$, we write $\mathit{head}(R)$ to denote the set of literals $\{ L_1,\mathellipsis,L_l\}$ and $\mathit{body}(R)$ to denote the set of (NAF-)literals $\{L_{l+1},\mathellipsis,L_m,\naf L_{m+1},\mathellipsis,\naf L_n\}$.
 Rules consisting only of a singleton head $L\leftarrow$ are identified with the literal $L$ and used interchangeably.
An EDP is ground if it contains no variables. If an EDP contains variables, it is identified with the set of its ground instantiations: the elements of its Herbrand universe are substituted in for the variables in all possible ways. We assume that the language contains no 
function symbol, so that each rule with variables represents a finite set of ground rules.
For a program $\kb$, we denote $\mathscr{L}_\kb$ the set of ground literals in the language of $\kb$.
Note that EDPs offer a high expressiveness including disjunctive and non-monotonic reasoning.

\begin{figure}[t]
\begin{center}
\includegraphics{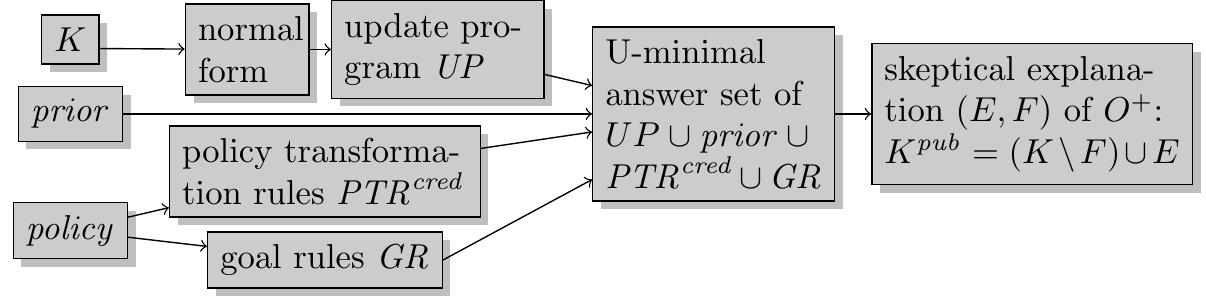}
\caption{Finding a confidentiality-preserving $K^\mathit{pub}$ for a credulous user\label{fig:confidentialitycredulous}}
\end{center}
\end{figure} 

\begin{example}
In a medical knowledge base  $\predicate{Ill}(x,y)$ states that a patient $x$ is ill with disease $y$;
 $\predicate{Treat}(x,y)$ states that $x$ is treated with medicine $y$. 
 Assume that if you read the record and find that one treatment (\constant{Medi1}) is recorded and another one  (\constant{Medi2}) is not recorded, then you know that the patient is at least ill with  \constant{Aids} or \constant{Flu} (and possibly has other illnesses).\\ $\kb=\{\predicate{Ill}(x,\constant{Aids});\predicate{Ill}(x,\constant{Flu})\leftarrow\predicate{Treat}(x,\constant{Medi1}),\naf \predicate{Treat}(x,\constant{Medi2})~,$\\ $~~~~~~~~~~\predicate{Ill}(\constant{Mary},\constant{Aids})~,~\predicate{Treat}(\constant{Pete},\constant{Medi1})\}~~$ serves as a running example.
\end{example}
The semantics of $\kb$ can be given by the answer set semantics \cite{gelfond1991classical}:
A set $S\subseteq \mathscr{L}_\kb$ of ground literals \emph{satisfies} a ground literal $L$ if $L\in S$; $S$ satisfies a conjunction if it satisfies every conjunct; $S$ satisfies a disjunction if it satisfies at least one disjunct; $S$ satisfies a ground rule if whenever the body literals are contained in $S$ ($\{ L_{l+1},\mathellipsis, L_m\}\subseteq S$) and all NAF-literals are not contained in $S$ ($\{ L_{m+1},\mathellipsis, L_n\}\cap S=\emptyset$), then at least one head literal is contained in $S$ ($L_i\in S$ for an $i$ such that $1\leq i \leq l$).
If an EDP $\kb$ contains no NAF-literals ($m=n$), then such a set $S$ is an \emph{answer set} of $\kb$ if $S$ is a subset-minimal set such that 
\begin{enumerate}
 \item $S$ satisfies every rule from the ground instantiation of $\kb$,
 \item If $S$ contains a pair of complementary literals $L$ and $\neg L$, then $S=\mathscr{L}_\kb$.
\end{enumerate}
This definition of an answer set can be extended to full EDPs (containing NAF-literals) as in \cite{sakama2003abductive}: For an EDP $\kb$ and a set of ground literals $S\subseteq \mathscr{L}_\kb$,   $\kb$ can be transformed into a NAF-free program $\kb^S$ as follows. For every ground rule from the ground instantiation of $\kb$ (with respect to its Herbrand universe), the rule
$L_1;\mathellipsis ;L_l \leftarrow L_{l+1},\mathellipsis , L_m$ is in $\kb^S$ if $\{L_{m+1},\mathellipsis ,L_n\}\cap S=\emptyset$. Then, $S$ is an answer set of $\kb$ if $S$ is an answer set of $\kb^S$.
An answer set is \emph{consistent} if it is not $\mathscr{L}_\kb$. A program $\kb$ is \emph{consistent} if it has a
consistent answer set; otherwise $\kb$ is \emph{inconsistent}.
\begin{example}\label{ex:answersets}
The example \kb{} has the following two consistent answer sets 
\begin{eqnarray*}
S_1&=&\{\predicate{Ill}(\constant{Mary},\constant{Aids}),
\predicate{Treat}(\constant{Pete},\constant{Medi1}),\predicate{Ill}(\constant{Pete},\constant{Aids})\}\\
S_2&=&\{\predicate{Ill}(\constant{Mary},\constant{Aids}),
\predicate{Treat}(\constant{Pete},\constant{Medi1}),\predicate{Ill}(\constant{Pete},\constant{Flu})\}
\end{eqnarray*}
When adding the negative fact $\neg\predicate{Ill}(\constant{Pete},\constant{Flu})$ to \kb, then there is just one consistent answer set left: for
$\kb':=\kb\cup\{\neg\predicate{Ill}(\constant{Pete},\constant{Flu})\}$
 the unique answer set is
\begin{displaymath}
S'=\{\predicate{Ill}(\constant{Mary},\constant{Aids}),\neg\predicate{Ill}(\constant{Pete},\constant{Flu}),
\predicate{Treat}(\constant{Pete},\constant{Medi1}),\predicate{Ill}(\constant{Pete},\constant{Aids})\}.
\end{displaymath}
\end{example}

 If a rule $R$ is satisfied in \emph{every}
answer set of $\kb$, we write $\kb\models R$. In particular, $\kb\models L$ if a literal $L$ is
included in every answer set of $\kb$. 

\section{Confidentiality-Preserving Knowledge Bases}\label{sec:confpreservingkbs}
When publishing a knowledge base \kb{} while preserving confidentiality of some data in $\kb$ we do this according to 
\begin{itemize}
 \item the query response semantics that a user querying the published knowledge base applies; we focus on  credulous query response semantics
 \item a confidentiality policy (denoted \policy) describing confidential information that should not be released to the public
 \item  background (a priori) knowledge (denoted \prior) that a user can combine with query responses from the published knowledge base
\end{itemize}
First we define the credulous query response semantics: a ground formula $Q$ is $\true$ in $\kb$, if $Q$ is satisfied in some answer set of $\kb$ -- that is, there might be answer sets that do not satisfy $Q$. If a rule $Q$ is non-ground and contains some free variables, the credulous response of  $\kb$ is the set of ground instantiations of $Q$ that are $\true$ in $\kb$.

\begin{definition}[Credulous query response semantics]
Let $U$ be the Herbrand universe of a consistent knowledge base $\kb$. 
The \emph{credulous query responses} of formula $Q(X)$ (with a vector $X$ of free variables) in $\kb$ are
\begin{eqnarray*}
 \mathit{cred}(\kb,Q(X))&=\{Q(A)\mid &A \mbox{ is a vector of elements } a \in U
 \mbox{ and there}\\&&\mbox{is an answer set of }\kb\mbox{ that satisfies } Q(A)\}
\end{eqnarray*}
 In particular, for a ground formula $Q$, 
\begin{displaymath}
	\mathit{cred}(\kb,Q)=\left\{
	\begin{array}{cl}
	Q& \mbox{ if }\kb\mbox{ has an answer set that satisfies }Q\\
	\emptyset & \mbox{ otherwise}
\end{array}
	\right.
\end{displaymath}
\end{definition}

It is usually assumed that in addition to the query responses a user has some additional knowledge that he can apply to the query responses. 
Hence, we additionally assume given a set of rules as some \emph{invariant} \textbf{a priori knowledge} \prior{}. 
Without loss of generality we assume that \prior{} is an EDP.
Thus, the priori knowledge may consist of additional facts that the user assumes to hold in \kb, or some rules that the user can apply to data in \kb{} to deduce new information. 

A \textbf{confidentiality policy} \policy{} specifies confidential information.
We assume that \policy{} contains only conjunctions of (NAF-)literals. However, see Section~\ref{sec:policytrans} for a brief discussion on how to use  more expressive policy formulas. 
We do not only have to avoid that the published knowledge base contains confidential information but also prevent the user from deducing confidential information with the help of his a priori knowledge; this is known as the inference problem \cite{farkas2002inferenceproblem,biskup2010usability}. 
\begin{example}\label{ex:prior}
 If we wish to declare the disease aids as confidential for any patient $x$ we can do this with 
 $\policy = \{\predicate{Ill}(x,\constant{Aids})\}$.
A user querying $K^\mathit{pub}$ might know that a person suffering from flu is not able to work. Hence 
 $\prior = \{\neg\predicate{AbleToWork}(x)\leftarrow\predicate{Ill}(x,\constant{Flu})\}$.
If we wish to also declare a lack of work ability as confidential, we can add this to the confidentiality policy:
 $\policy' = \{\predicate{Ill}(x,\constant{Aids})~,~\neg\predicate{AbleToWork}(x)\}$.
\end{example}
Next, we establish a definition of confidentiality-preservation that allows for the answer set semantics as an inference mechanism and respects the credulous query response semantics:
when treating elements of the confidentiality policy as queries, the credulous responses must be empty.

\begin{definition}[Confidentiality-preservation for credulous user]\label{def:confidentialskeptical}
A knowledge base $K^\mathit{pub}$ \emph{preserves confidentiality} of a given confidentiality policy 
under the credulous query response semantics and with respect to a given a priori knowledge \prior, if for every conjunction $C(X)$ in the policy, the credulous query responses of $C(X)$ in $K^\mathit{pub}\cup\prior$ are empty:
$\mathit{cred}(K^\mathit{pub}\cup\prior,C(X))=\emptyset$.
\end{definition}

Note that in this definition the Herbrand universe of $K^\mathit{pub}\cup\prior$ is applied in the query response semantics; hence, free variables in policy elements $C(X)$ are instantiated according to this universe. Note also that $K^\mathit{pub}\cup\prior$ must be consistent. Confidentiality-preservation for \emph{skeptical} query response semantics is topic of future work.

A goal secondary to confidentiality-preservation is minimal change:
We want to publish as many data as possible and want to modify these data as little as possible.
Different notions of minimal change are used in the literature (see for example \cite{afrati2009repair} for a collection of minimal change semantics in a data integration setting). 
We apply a subset-minimal change semantics: we choose a $K^\mathit{pub}$ that differs from $K$ only subset-minimally. 
In other words, there is not other confidentiality-preserving knowledge base ${K^\mathit{pub}}'$ which inserts (or deletes) less rules to (from) $K$ than $K^\mathit{pub}$. \begin{definition}[Subset-minimal change] 
A confidentiality-preserving knowledge base $K^\mathit{pub}$ \emph{subset-minimally changes} $K$ (or is \emph{minimal}, for short) if there is no confidentiality-preserving knowledge base
 ${K^\mathit{pub}}'$ such that 
$((K\setminus {K^\mathit{pub}}')\cup({K^\mathit{pub}}'\setminus K))\subset ((K\setminus K^\mathit{pub})\cup(K^\mathit{pub}\setminus K))$.
\end{definition}

\begin{example}\label{ex:confidentialcredulous}
 For the example $\kb$ and \policy{} and no a priori knowledge, the fact $\predicate{Ill}(\constant{Mary},\constant{Aids})$ has to be deleted. But also  $\predicate{Ill}(\constant{Pete},\constant{Aids})$ can be deduced credulously, because it is satisfied by answer set $S_1$. In order to avoid this, we have three options:
 delete  $\predicate{Treat}(\constant{Pete},\constant{Medi1})$, delete the non-literal rule in \kb{} or insert $\predicate{Treat}(\constant{Pete},\constant{Medi2})$. The same solutions are found
 for $\kb$, $\policy'$ and \prior: they block the credulous deduction of $\neg\predicate{AbleToWork}(\constant{Pete})$. The same applies to $\kb'$ and \policy.
\end{example}

In the following sections we obtain a minimal solution $K^\mathit{pub}$ for a given input $K$, $\prior$ and $\policy$ by transforming the input into a problem of \emph{extended abduction} and solving it with an appropriate update program.

\section{Extended Abduction}\label{sec:extendedabduction}
Traditionally, given a knowledge base $K$ and an observation formula $O$, \emph{abduction} finds a ``(positive) explanation'' $E$ -- a set of hypothesis formulas -- such that every answer set of the knowledge base and the explanation together satisfy the observation; that is, $K\cup E\models O$.
Going beyond that \cite{inoue1995abductive,sakama2003abductive} use \emph{extended} abduction with the notions of ``negative observations'', ``negative explanations'' $F$ and ``anti-explanations''. 
An abduction problem in general can be restricted by specifying a designated set $\mathcal{A}$ of \emph{abducibles}. This set poses syntactical restrictions on the explanation sets $E$ and $F$. In particular, positive explanations are characterized by $E\subseteq \mathcal{A}\setminus K$ and negative explanations by $F\subseteq K\cap\mathcal{A}$.
If  $\mathcal{A}$ contains a formula with variables, it is meant as a shorthand for all ground instantiations of the formula.
In this sense, an EDP $\kb$ accompanied by an EDP  $\mathcal{A}$ is called an \emph{abductive program} written as $\langle \kb,\mathcal{A}\rangle$. The aim of extended abduction is then to find (anti-)explanations as follows (where in this article only \emph{skeptical} (anti-)explanations are needed):
\begin{itemize}
 \item given a \emph{positive} observation $O$, find a pair $(E,F)$ where $E$ is a positive explanation and $F$ is a negative explanation such that 
\begin{enumerate}
 \item \textbf{[skeptical explanation]} $O$ is satisfied in \emph{every} answer set of $(\kb\setminus F)\cup E$; that is, $(\kb\setminus F)\cup E\models O$
 \item \textbf{[consistency]} $(\kb\setminus F)\cup E$ is consistent
 \item \textbf{[abducibility]} $E\subseteq \mathcal{A}\setminus \kb$ and $F\subseteq \mathcal{A}\cap \kb$
\end{enumerate}
\item given a \emph{negative} observation $O$, find a pair $(E,F)$ where $E$ is a positive anti-explanation and $F$ is a negative anti-explanation such that 
\begin{enumerate}
 \item \textbf{[skeptical anti-explanation]} there is \emph{no} answer set of $(\kb\setminus F)\cup E$ in which $O$ is satisfied
 \item \textbf{[consistency]} $(\kb\setminus F)\cup E$ is consistent
 \item \textbf{[abducibility]} $E\subseteq \mathcal{A}\setminus \kb$ and $F\subseteq \mathcal{A}\cap \kb$
\end{enumerate}
\end{itemize}

 Among (anti-)explanations, \textbf{minimal}
(anti-)explanations characterize a subset-minimal alteration of the program $\kb$: an (anti-)explanation $(E, F)$ of an
observation $O$ is called minimal if for any (anti-)explanation $(E', F')$ of $O$, $E'\subseteq E$
and $F' \subseteq F$ imply $E' = E$ and $F' = F$.

For an abductive program $\langle \kb,\mathcal{A}\rangle$ both $\kb$ and $\mathcal{A}$ are semantically identified with their
ground instantiations with respect to the Herbrand universe, so that set operations over them are defined on the ground
instances. Thus, when $(E, F)$ contain formulas with variables, $(\kb \setminus F) \cup E$ means deleting
every instance of formulas in $F$, and inserting any instance of formulas in $E$ from/into $\kb$. When $E$
contains formulas with variables, the set inclusion $E'\subseteq E$ is defined for any set $E'$ of instances of formulas in $E$.
Generally, given sets $S$ and $T$ of literals/rules containing variables, any set operation
$\circ$ is defined as $S \circ T = \mathit{inst}(S)\circ\mathit{inst}(T)$ where $\mathit{inst}(S)$ is the ground instantiation of
$S$. For example, when $p(x) \in T$, for any constant $a$ occurring in $T$, it holds that $\{p(a)\} \subseteq T$, $\{p(a)\}\setminus T = \emptyset$, and $T \setminus\{p(a)\} = (T \setminus\{p(x)\}) \cup \{ p(y) \mid
y \not= a\}$, etc. Moreover, any literal/rule in a set is identified with its variants modulo
variable renaming.

\subsection{Normal form}\label{sec:normalform}

Although extended abduction can handle the very general format of EDPs, some syntactic transformations are helpful.
Based on \cite{sakama2003abductive}
we will briefly describe how a semantically equivalent normal form of an abductive program $\langle \kb,\mathcal{A}\rangle$ is obtained -- where both the program $\kb$ and the set $\mathcal{A}$ of abducibles are EDPs. 
This makes an automatic handling of abductive programs easier; for example, abductive programs in normal form can be easily transformed into update programs as described in Section~\ref{sec:updateprograms}.
The main step is that rules in $\mathcal{A}$ can be mapped to atoms by a naming function $n$. 
Let $\mathcal{R}$ be the set of abducible rules:
\begin{displaymath}
 \mathcal{R}=\{\Sigma\leftarrow\Gamma\mid (\Sigma\leftarrow\Gamma) \in \mathcal{A}\mbox{ and }(\Sigma\leftarrow\Gamma)\mbox{ is not a literal}\}
\end{displaymath}
Then the \emph{normal form} $\langle \kb^n,\mathcal{A}^n\rangle$ is defined as follows where $n(R)$ maps each rule $R$ to a fresh atom with the same free variables as $R$:
\begin{eqnarray*}
 \kb^n&=(\kb\setminus\mathcal{R})&\cup\{\Sigma\leftarrow\Gamma,n(R)\mid R=(\Sigma\leftarrow\Gamma)\in\mathcal{R}\}\\
&&\cup\{n(R)\mid R\in \kb\cap\mathcal{R}\}\\
\mathcal{A}^n&=(\mathcal{A}\setminus\mathcal{R})&\cup\{n(R)\mid R\in\mathcal{R}\}
\end{eqnarray*}
We define that any abducible literal $L$ has the name $L$,
i.e., $n(L) = L$. 
It is shown in \cite{sakama2003abductive}, that for
any observation $O$ there is a 1-1 correspondence between (anti-)explanations with
respect to $\langle \kb,A\rangle$ and those with respect to $\langle \kb^n,A^n\rangle$. That is, for $n(E) =
\{ n(R) | R \in E \}$ and $n(F) =
\{ n(R) | R \in F \}$:
 an observation
$O$ has a (minimal) skeptical (anti-)explanation $(E, F)$ with respect
to $\langle \kb,A\rangle$ iff $O$ has a (minimal) skeptical (anti-)explanation $(n(E), n(F))$
with respect to $\langle \kb^n,A^n\rangle$. Hence, insertion (deletion) of a rule's name in the normal form corresponds to insertion (deletion) of the rule in the original program. 
In sum, with the normal form transformation, any abductive program with abducible rules is reduced
to an abductive program with only abducible literals. 

\begin{example}\label{ex:normalform}
 We transform the example knowledge base \kb{} into its normal form based on a set of abducibles that is identical to \kb: that is $\mathcal{A}=\kb$; a similar setting will be used in Section~\ref{sec:onlyremovals} to achieve deletion of formulas from \kb.
Hence we transform $\langle \kb,\mathcal{A}\rangle$ into its normal form $\langle \kb^n,\mathcal{A}^n\rangle$ as follows where we write $n(R)$ for the naming atom of the only rule in $\mathcal{A}$:
\begin{eqnarray*}
 \kb^n&=&\{\predicate{Ill}(\constant{Mary},\constant{Aids}),~~~~~~\predicate{Treat}(\constant{Pete},\constant{Medi1}),~~~~~n(R),\\
&&\predicate{Ill}(x,\constant{Aids});\predicate{Ill}(x,\constant{Flu})\leftarrow\predicate{Treat}(x,\constant{Medi1}),\naf \predicate{Treat}(x,\constant{Medi2}),n(R)\}\\
\mathcal{A}^n&=&\{\predicate{Ill}(\constant{Mary},\constant{Aids}),~~~~\predicate{Treat}(\constant{Pete},\constant{Medi1}),~~~~~n(R)~~\}
\end{eqnarray*}
\end{example}


\subsection{Update programs}\label{sec:updateprograms}

Minimal (anti-)explanations can be computed with \emph{update programs} (UPs) \cite{sakama2003abductive}.
The \emph{update-minimal} (U-minimal) answer sets of a UP describe which rules have to be deleted from the program, and which rules have to be inserted into the program, in order (un-)explain an observation.

For the given EDP \kb{} and a given set of abducibles $\mathcal{A}$, a set of \textbf{update rules} $\mathit{UR}$ is devised that describe how entries of \kb{} can be changed. This is done with the following three types of rules.
\begin{enumerate}
 \item \textbf{[Abducible rules]} The rules for abducible literals state that an abducible is either true in \kb{} or not. 
For each $L\in\mathcal{A}$, a new atom $\bar{L}$ is introduced that has the same variables as $L$. Then the set of abducible rules for each $L$ is defined as
\begin{displaymath}
\mathit{abd}(L):=\{L\leftarrow \naf \bar{L}~,~\bar{L}\leftarrow \naf {L}\}.
\end{displaymath}
\item \textbf{[Insertion rules]} Abducible literals that are not contained in $\kb$ might be inserted into \kb{} and hence might occur in the set $E$ of the explanation $(E,F)$. For each $L\in\mathcal{A}\setminus \kb$, a new atom $+L$ is introduced and the insertion rule is defined as
\begin{displaymath}
+L\leftarrow L. 
\end{displaymath}
\item \textbf{[Deletion rules]} Abducible literals that are contained in $\kb$ might be deleted from \kb{} and hence might occur in the set $F$ of the explanation $(E,F)$. For each $L\in\mathcal{A}\cap \kb$, a new atom $-L$ is introduced and the deletion rule is defined as
\begin{displaymath}
-L\leftarrow \naf L. 
\end{displaymath}
\end{enumerate}
The \textbf{update program} is then defined by replacing abducible literals in \kb{} with the update rules; that is, 
\begin{displaymath}
UP=(\kb\setminus \mathcal{A})\cup UR.
\end{displaymath}

\begin{example}\label{ex:updateprog}
Continuing Example~\ref{ex:normalform}, from $\langle \kb^n,\mathcal{A}^n\rangle$ we obtain 
\begin{eqnarray*}
 UP&=\{&\mathit{abd}(\predicate{Ill}(\constant{Mary},\constant{Aids})),~~~\mathit{abd}(\predicate{Treat}(\constant{Pete},\constant{Medi1})),~~~\mathit{abd}(n(R)),\\
&&-\predicate{Ill}(\constant{Mary},\constant{Aids})\leftarrow\naf\predicate{Ill}(\constant{Mary},\constant{Aids}),\\
&&-\predicate{Treat}(\constant{Pete},\constant{Medi1})\leftarrow\naf\predicate{Treat}(\constant{Pete},\constant{Medi1}),\\
&&-n(R)\leftarrow\naf\, n(R),\\
&&\predicate{Ill}(x,\constant{Aids});\predicate{Ill}(x,\constant{Flu})\leftarrow\predicate{Treat}(x,\constant{Medi1}),\naf \predicate{Treat}(x,\constant{Medi2}),n(R)\}
\end{eqnarray*}
\end{example}

The set of atoms $+L$ is the set $\mathcal{UA}^+$ of positive update atoms; the set of atoms $-L$ is the set $\mathcal{UA}^-$ of negative update atoms. The set of \textbf{update atoms} is $\mathcal{UA}=\mathcal{UA}^+\cup\mathcal{UA}^-$. From all answer sets of an update program $\mathit{UP}$ we can identify those that are \textbf{update minimal} (U-minimal): they contain less update atoms than others. Thus, $S$ is U-minimal iff there is no answer set $T$ such that $T\cap\mathcal{UA}\subset S\cap\mathcal{UA}$.

\subsection{Ground observations}\label{sec:groundobservations}
It is shown in \cite{inoue1995abductive} how in some situations the observation formulas $O$ can be mapped to new positive ground observations.
Non-ground atoms with variables can be mapped to a new ground observation. Several positive observations can be conjoined and mapped to a new ground observation.
A negative observation (for which an anti-explanation is sought) can be mapped as a NAF-literal to a new positive observation (for which then an explanation has to be found).
Moreover, several negative observations can be mapped as a conjunction of NAF-literals to one new positive observation such that its resulting explanation acts as an anti-explanation for all negative observations together.
 Hence, in extended abduction it is usually assumed that $O$ is a positive ground observation for which an explanation has to be found.
In case of finding a skeptical explanation, an inconsistency check has to be made on the resulting knowledge base.
Transformations to a ground observation and inconsistency check will be detailed in Section~\ref{sec:policytrans} and applied to confidentiality-preservation.

\section{Confidentiality-Preservation with UPs}\label{sec:confidentialityups}

We now show how to achieve confidentiality-preservation by extended abduction: we define the set of abducibles and describe how a confidentiality-preserving knowledge base can be obtained by computing U-minimal answer sets of the appropriate update program.
We additionally distinguish between the case that we allow only deletions of formulas -- that is, in the anti-explanation $(E,F)$ the set $E$ of positive anti-explanation formulas is empty -- and the case that we also allow insertions.

\subsection{Policy transformation for credulous users}\label{sec:policytrans}
Elements of the confidentiality policy will be treated as negative observations for which an anti-explanation has to be found. 
Accordingly, we will transform policy elements to a set of rules containing new positive observations as sketched in Section~\ref{sec:groundobservations}.
We will call these rules \textbf{policy transformation rules for credulous users} ($\mathit{PTR}^\mathit{cred}$).

More formally, assume \policy{} contains $k$ elements. 
For each conjunction $C_i\in\policy$ ($i=1\mathellipsis k$), we introduce a new negative ground observation $O_i^-$ and map $C_i$ to $O_i^-$. As each  $C_i$ is a conjunction of (NAF-)literals, the resulting formula is an EDP rule.
As a last policy transformation rule, we add one that maps all new negative ground observations $O_i^-$ (in their NAF version) to a positive observation $O^+$.	Hence,

\begin{eqnarray*}
\mathit{PTR}^\mathit{cred}&:=&\{O^-_i\leftarrow C_i\mid C_i\in\policy\}\cup\{O^+\leftarrow \naf\,O_1^-,\mathellipsis,\naf\,O_k^-\}.
\end{eqnarray*}

\begin{example}\label{example:ptr}
The set of policy transformation rules for $\policy'$ is
\begin{displaymath}
\mathit{PTR}^\mathit{cred}=\{O^-_1\leftarrow	\predicate{Ill}(x,\constant{Aids})~,~O^-_2\leftarrow	\neg\predicate{AbleToWork}(x)~,~O^+\leftarrow	\naf\, O^-_1,\naf\, O^-_2\}
\end{displaymath}
\end{example}

Lastly, we consider a \textbf{goal rule} $\mathit{GR}$ that enforces the single positive observation $O^+$: $\mathit{GR}=\{\leftarrow \naf\, O^+\}$.

We can also allow more expressive policy elements in disjunctive normal form (DNF: a disjunction of conjunctions of (NAF-)literals).
If we map a DNF formula to a new observation (that is, $O^-_\mathit{disj}\leftarrow C_1\lor\mathellipsis\lor C_l$) this is equivalent to mapping each conjunct to the observation (that is, $O^-_\mathit{disj}\leftarrow C_1,\mathellipsis, O^-_\mathit{disj}\leftarrow C_l$).
We also semantically justify this splitting into disjuncts by arguing that in order to protect confidentiality of a disjunctive formula we indeed have to protect each disjunct alone. However, if variables are shared among disjuncts, these variables have to be grounded according to the Herbrand universe of $K\cup\prior$ first; otherwise the shared semantics of these variables is lost.

\subsection{Deletions for credulous users}\label{sec:onlyremovals}
As a simplified setting, we first of all assume that only deletions are allowed to achieve confidentiality-preservation. This setting can informally be described as follows:
For a given knowledge base $\kb$, if we only allow deletions of rules from \kb,
we have to find a  \emph{skeptical negative explanation} $F$ that explains the new positive observation $O^+$ while respecting \prior{} as invariable a priori knowledge.
The set of abducibles is thus identical to \kb{} as we want to choose formulas from \kb{} for deletion: $\mathcal{A}=\kb$.
That is, in total we consider the abductive program $\langle \kb,\mathcal{A}\rangle$.  Then, we transform it into normal form $\langle \kb^n,\mathcal{A}^n\rangle$, and  compute its update program $UP$ as described in Section~\ref{sec:updateprograms}.
As for \prior{}, we add this set to the update program $UP$ in order to make sure that the resulting answer sets of the update program do not contradict $\prior$.
Finally, we add all the policy transformation rules $\mathit{PTR}^\mathit{cred}$ and the goal rule $\mathit{GR}$.
The goal rule is then meant as a constraint that filters out those answer sets of $UP\cup\prior\cup\mathit{PTR}^\mathit{cred}$ in which $O^+$ is \emph{true}. 
 We thus obtain a new program $P$ as 
\begin{displaymath}
 P=UP\cup\prior\cup\mathit{PTR}^\mathit{cred}\cup\mathit{GR}
\end{displaymath}
 and compute its U-minimal answer sets. If $S$ is one of these answer sets, the negative explanation $F$ is obtained from the negative update atoms contained in $S$:
	$F=\{L\mid -L\in S\}$.

To obtain a confidentiality-preserving knowledge base for a credulous user, we have to check for inconsistency with the negation of the positive observation $O^+$ (which makes $F$ a \emph{skeptical} explanation of $O^+$); and allow only answer sets of $P$ that are U-minimal among those respecting this inconsistency property. More precisely, we check whether 
\begin{equation}\label{eqn:inconsistency}
(K\setminus F)\cup\prior\cup\mathit{PTR}^\mathit{cred}\cup\{\leftarrow O^+\}	\mbox{ is inconsistent.}
\end{equation}

\begin{example}\label{ex:up}
 We combine the update program $UP$ of $\kb$ with  $\prior$ and the policy transformation rules and goal rule.
This leads to the following two U-minimal answer sets with only deletions which satisfy the inconsistency property (\ref{eqn:inconsistency}):
\begin{eqnarray*}
S'_1&=&\{-\predicate{Ill}(\constant{Mary},\constant{Aids}),-\predicate{Treat}(\constant{Pete},\constant{Medi1}),n(R),
\overline{\predicate{Ill}(\constant{Mary},\constant{Aids})},\overline{\predicate{Treat}(\constant{Pete},\constant{Medi1})},O^+\}\\
S'_2&=&\{-\predicate{Ill}(\constant{Mary},\constant{Aids}),\predicate{Treat}(\constant{Pete},\constant{Medi1}),-n(R),\overline{\predicate{Ill}(\constant{Mary},\constant{Aids})},\overline{n(R)},O^+\}.
\end{eqnarray*}
These answer sets correspond to the minimal solutions from Example~\ref{ex:confidentialcredulous} where $\predicate{Ill}(\constant{Mary},\constant{Aids})$ must be deleted together with either $\predicate{Treat}(\constant{Pete},\constant{Medi1})$ or the rule named $R$. 
\end{example}

\begin{theorem}[Correctness for deletions]\label{theo:delete}
A knowledge base $K^\mathit{pub}=K\setminus F$ preserves confidentiality and changes $K$ subset-minimally iff $F$ is obtained by an answer set of the program $P$ that is U-minimal among those satisfying the inconsistency property (\ref{eqn:inconsistency}). 
\end{theorem}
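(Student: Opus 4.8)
The plan is to establish the biconditional by chaining together the correspondences that the earlier sections have set up: (i) between deletions $F$ from $K$ and negative explanations in the abductive program $\langle K, \mathcal{A}\rangle$ with $\mathcal{A}=K$; (ii) between those and (anti-)explanations in the normal form $\langle K^n,\mathcal{A}^n\rangle$ (the $1$-$1$ correspondence quoted from \cite{sakama2003abductive}); (iii) between skeptical anti-explanations there and answer sets of the update program $UP$ with the policy/goal machinery added; and (iv) between confidentiality-preservation as in Definition~\ref{def:confidentialskeptical} and the negative-observation condition built into $\mathit{PTR}^\mathit{cred}$ and $\mathit{GR}$. I would present the argument as two directions, forward ($K^\mathit{pub}$ confidentiality-preserving and minimal $\Rightarrow$ $F$ comes from such an answer set) and backward.

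First I would unwind the semantics of $P = UP\cup\prior\cup\mathit{PTR}^\mathit{cred}\cup\mathit{GR}$. The key observation is that, by construction of the update rules $\mathit{abd}(L)$, $+L\leftarrow L$, $-L\leftarrow\naf L$, the answer sets $S$ of $UP$ (alone) are in bijection with pairs $(E,F)$ where $E\subseteq\mathcal{A}\setminus K$, $F\subseteq\mathcal{A}\cap K$, and the ``surviving'' literals of $S$ restricted to $\mathscr{L}_K$ form an answer set of $(K\setminus F)\cup E$; this is exactly the update-program correspondence of \cite{sakama2003abductive}, and since here $\mathcal{A}=K$ we get $E=\emptyset$, so $S$ encodes a candidate deletion set $F=\{L\mid -L\in S\}$ together with an answer set of $K\setminus F$. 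Adding $\prior$ means $S$ restricted to $\mathscr{L}_{K\cup\prior}$ is an answer set of $(K\setminus F)\cup\prior$. Next, the rules $O_i^-\leftarrow C_i$ make $O_i^-$ true in $S$ exactly when that answer set satisfies the policy conjunction $C_i$ (for the relevant ground instance), and $O^+\leftarrow\naf O_1^-,\dots,\naf O_k^-$ makes $O^+$ true in $S$ exactly when \emph{this particular} answer set satisfies none of the policy elements. The goal rule $\mathit{GR}=\{\leftarrow\naf O^+\}$ discards answer sets where $O^+$ is false. So every answer set of $P$ corresponds to an $F$ such that $(K\setminus F)\cup\prior$ has \emph{at least one} answer set avoiding all policy elements; but this is weaker than confidentiality-preservation, which demands that \emph{every} answer set avoids them. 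This is precisely why the extra inconsistency check (\ref{eqn:inconsistency}) is needed: by the ``ground observations'' discussion of Section~\ref{sec:groundobservations}, $F$ is a \emph{skeptical} anti-explanation for all negative observations simultaneously iff $(K\setminus F)\cup\prior\cup\mathit{PTR}^\mathit{cred}\cup\{\leftarrow O^+\}$ is inconsistent, and inconsistency of that program says no answer set of $(K\setminus F)\cup\prior$ makes $O^+$ true, i.e.\ every answer set satisfies some $C_i$ --- wait, one must be careful here: I would check that it instead says every answer set makes $O^+$ false, hence (unwinding $\mathit{PTR}^\mathit{cred}$) makes \emph{some} $O_i^-$ true, i.e.\ satisfies \emph{some} policy conjunction; so I actually want the complementary reading, and the correct statement is that confidentiality-preservation corresponds to \emph{consistency} being \emph{absent}, i.e.\ $\mathit{cred}(K^\mathit{pub}\cup\prior,C_i(X))=\emptyset$ for all $i$ iff no answer set of $(K\setminus F)\cup\prior$ satisfies any $C_i$ iff $O^+$ is true in every answer set iff $(K\setminus F)\cup\prior\cup\mathit{PTR}^\mathit{cred}\cup\{\leftarrow O^+\}$ is inconsistent. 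I would lay this equivalence out explicitly, as it is the conceptual heart of the statement.

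With the semantic correspondence in hand, the minimality half is comparatively routine: U-minimality of $S$ among answer sets satisfying (\ref{eqn:inconsistency}) means $S\cap\mathcal{UA}$, hence $F$ (since $\mathcal{UA}^+$ contributes nothing when $\mathcal{A}=K$), is $\subseteq$-minimal among deletion sets yielding a confidentiality-preserving $K^\mathit{pub}$. I would then translate this into the symmetric-difference formulation of Definition~\ref{def:confidentialskeptical}'s companion (subset-minimal change): since only deletions occur, $(K\setminus K^\mathit{pub})\cup(K^\mathit{pub}\setminus K)=F$, so $\subseteq$-minimality of $F$ is literally subset-minimal change. One subtlety to address is that the normal-form transformation replaces a non-literal rule $R$ by its name $n(R)$, so ``deleting $R$'' is realized as ``$-n(R)$'', and I would invoke the quoted $1$-$1$ correspondence to transport minimality back and forth between $\langle K,\mathcal{A}\rangle$ and $\langle K^n,\mathcal{A}^n\rangle$; also that minimality must be taken \emph{within} the class satisfying (\ref{eqn:inconsistency}), not among all answer sets of $P$, which the theorem statement already builds in. The main obstacle I anticipate is getting the quantifier direction in the inconsistency-check equivalence exactly right --- credulous-emptiness is a universal statement over answer sets, whereas a bare answer set of $P$ only witnesses an existential --- and making airtight that the inconsistency check is both necessary (to upgrade the existential to a universal) and does not over-prune (every genuinely confidentiality-preserving minimal $F$ does arise from some U-minimal answer set satisfying it, which needs the observation that such an $F$, fed back through $UP\cup\prior\cup\mathit{PTR}^\mathit{cred}$, still produces \emph{some} answer set, namely one making $O^+$ true, so $\mathit{GR}$ does not kill it). Everything else reduces to bookkeeping with the update-program and normal-form theorems already cited from \cite{sakama2003abductive,inoue1995abductive}.
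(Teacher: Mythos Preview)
Your proposal is correct and follows essentially the same route as the paper's proof sketch: note that $\mathcal{A}=K$ forces $E=\emptyset$, invoke the $1$-$1$ correspondence from \cite{sakama2003abductive} between U-minimal answer sets and minimal (anti-)explanations (through the normal form), unwind $\mathit{PTR}^\mathit{cred}$ to equate skeptical anti-explanation of the $O_i^-$ with $\mathit{cred}(K^\mathit{pub}\cup\prior,C_i)=\emptyset$, and read subset-minimal change off U-minimality. Your treatment is considerably more explicit than the paper's sketch, and the final equivalence chain you state---credulous-emptiness $\Leftrightarrow$ no answer set satisfies any $C_i$ $\Leftrightarrow$ $O^+$ true in every answer set $\Leftrightarrow$ inconsistency of (\ref{eqn:inconsistency})---is exactly right; just tidy up the exploratory back-and-forth preceding it so the reader is not misled by the intermediate misstatements of the constraint direction.
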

\begin{proof}\emph{(Sketch)}
First of all note that because we chose $K$ to be the set of abducibles $\mathcal{A}$, only negative update atoms from $\mathcal{UA}^-$ occur in $\mathit{UP}$ -- no insertions with update atoms from $\mathcal{UA}^+$ will be possible. Hence we automatically obtain an anti-explanation $(E,F)$ where $E$ is empty. 
As shown in \cite{sakama2003abductive}, there is a 1-1 correspondence of minimal explanations and U-minimal answer sets of update programs; and anti-explanations are identical to explanations of a new positive observation when applying the transformations as in $\mathit{PTR}^\mathit{cred}$. By properties of skeptical (anti-)explanations we have thus $K^\mathit{pub}\cup\prior\cup\mathit{PTR}^\mathit{cred}\models O^+$ but for every $O_i^-$ there is no answer set in which $O_i^-$ is satisfied. This holds iff for every policy element $C_i$ there is no answer set of $K^\mathit{pub}\cup\prior$ that satisfies any instantiation of $C_i$ (with respect to the Herbrand universe of $K^\mathit{pub}\cup\prior$); thus 
$\mathit{cred}(K^\mathit{pub}\cup\prior,C_i)=\emptyset$. Subset-minimal change carries over from U-minimality of answer sets. 
\end{proof}

\subsection{Deletions and literal insertions}
To obtain a confidentiality-preserving knowledge base, (incorrect) entries may also be inserted into the knowledge base. 
To allow for insertions of literals, a more complex set $\mathcal{A}$ of abducibles has to be chosen. We reinforce the point that the subset $\mathcal{A}\cap K$ of abducibles that are already contained in the knowledge base $K$ are those that may be deleted while the subset  $\mathcal{A}\setminus K$ of those abducibles that are not contained in $K$ may be inserted.

First of all, we assume that the policy transformation is applied as described in Section~\ref{sec:policytrans}. Then, starting from the new negative observations $O_i^-$ used in the policy transformation rules, we trace back all rules in $K\cup\prior\cup\mathit{PTR}^\mathit{cred}$ that influence these new observations and collect all literals in the bodies of these rules. In other words, we construct a dependency graph (as in \cite{zhao2010usingasp}) and collect the literals that the negative observations depend on.
More formally, let $P_0$ be the set of literals that the new observations $O^-_i$ directly depend on:
\begin{eqnarray*}
	P_0&=\{L\mid& L\in\mathit{body}(R) \mbox{ or }\naf L \in\mathit{body}(R)\\
	&&\mbox{ where }R\in\mathit{PTR}^\mathit{cred}\mbox{ and }O^-_i\in\mathit{head}(R)\}
\end{eqnarray*}
Next we iterate and collect all the literals that the $P_0$ literals depend on:
\begin{eqnarray*}
	P_{j+1}&=\{L\mid& L\in\mathit{body}(R)\mbox{ or }\naf L \in\mathit{body}(R)\\
&&	\mbox{ where }R\in K\cup\prior\cup\mathit{PTR}^\mathit{cred}\mbox{ and }\mathit{head}(R)\cap P_j\neq\emptyset\}
\end{eqnarray*}
 and combine all such literals in a set $\mathcal{P}=\bigcup^\infty_{j=0}P_j$.

As we also want to have the option to delete rules from $K$ (not only the literals in $\mathcal{P}$), we define the set of abducibles as the set $\mathcal{P}$ plus all those rules in $K$ whose head depends on literals in $\mathcal{P}$:
\begin{displaymath}
\mathcal{A}=\mathcal{P}\cup\{R\mid R\in K \mbox{ and }\mathit{head}(R)\cap \mathcal{P}\neq \emptyset\}	
\end{displaymath}

\begin{example}
For the example $K\cup\prior\cup\mathit{PTR}^\mathit{cred}$, the dependency graph is shown in Figure~\ref{fig:dependencygraph}.
We note that the new negative observation $O^-_1$ directly depends on the literal $\predicate{Ill}(x,\constant{Aids})$ and the new negative observation $O^-_2$ directly depends on the literal $\neg\predicate{AbleToWork}(x)$; this is the first set of literals
$	P_0=\{\predicate{Ill}(x,\constant{Aids}),\neg\predicate{AbleToWork}(x)\}$.
By tracing back the dependencies in the graph,  
$\mathcal{P}=\{\predicate{Ill}(x,\constant{Aids}),\neg\predicate{AbleToWork}(x),\predicate{Ill}(x,\constant{Flu}),$ $\predicate{Treat}(x,\constant{Medi1}),\predicate{Treat}(x,\constant{Medi2})\}$ is obtained.
Lastly, we also have to add the rule $R$ from $K$ to $\mathcal{A}$ because literals in its head are contained in $\mathcal{P}$.
\end{example}

\begin{figure}
\begin{center}
\includegraphics{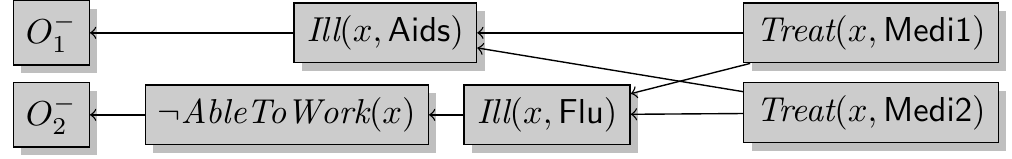}
\caption{Dependency graph for literals in $K\cup\prior\cup\mathit{PTR}$\label{fig:dependencygraph}}
\end{center}
\end{figure} 

We obtain the normal form and then the update program $\mathit{UP}$ for $K$ and the new set of abducibles $\mathcal{A}$.
The process of finding a skeptical explanation proceeds with finding an answer set of program $P$ as in Section~\ref{sec:onlyremovals} where additionally the positive explanation $E$ is obtained as
$	E=\{L\mid +L\in S\}$ and $S$ is U-minimal among those satisfying
\begin{equation}\label{eqn:inconsistencyinsert}
(K\setminus F)\cup E\cup\prior\cup\mathit{PTR}^\mathit{cred}\cup\{\leftarrow O^+\}	\mbox{ is inconsistent.}
\end{equation}

\begin{example}
For $\mathit{UP}$ from Example~\ref{ex:up} the new set of abducibles leads to additional insertion rules. Among others, the insertion rule for the new abducible $\predicate{Treat}(\constant{Pete},\constant{Medi2})$ is
$+\predicate{Treat}(\constant{Pete},\constant{Medi2})\leftarrow\predicate{Treat}(\constant{Pete},\constant{Medi2})$.
With this new rule included in $UP$, we also obtain the solution of Example~\ref{ex:confidentialcredulous} where the fact $\predicate{Treat}(\constant{Pete},\constant{Medi2})$ is inserted into $K$ (together with deletion of $\predicate{Ill}(\constant{Mary},\constant{Aids})$).
\end{example}

\begin{theorem}[Correctness for deletions \& literal insertions]
A knowledge base $K^\mathit{pub}=(K\setminus F)\cup E$ preserves confidentiality and changes $K$ subset-minimally iff $(E,F)$ is obtained by an answer set of program $P$ that is U-minimal among those satisfying inconsistency property (\ref{eqn:inconsistencyinsert}). 
\end{theorem}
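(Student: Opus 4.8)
The plan is to follow the architecture of the proof of Theorem~\ref{theo:delete} and to concentrate on the single genuinely new ingredient, namely the adequacy of the abducible set $\mathcal{A}$ obtained from the dependency graph. Since now $\mathcal{A}\setminus K\neq\emptyset$, the update program $\mathit{UP}$ contains insertion rules $+L\leftarrow L$ alongside the deletion rules, so a U-minimal answer set $S$ of $P=\mathit{UP}\cup\prior\cup\mathit{PTR}^\mathit{cred}\cup\mathit{GR}$ yields a full pair $(E,F)$ with $E=\{L\mid +L\in S\}$ and $F=\{L\mid -L\in S\}$; reading these as ground rules, $F=K\setminus K^\mathit{pub}$ and $E=K^\mathit{pub}\setminus K$ for $K^\mathit{pub}=(K\setminus F)\cup E$, so $E\cup F$ is exactly the symmetric difference $(K\setminus K^\mathit{pub})\cup(K^\mathit{pub}\setminus K)$.

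First I would establish the confidentiality half exactly as in Theorem~\ref{theo:delete}. By \cite{sakama2003abductive} there is a 1--1 correspondence between U-minimal answer sets of $\mathit{UP}$ and minimal skeptical (anti-)explanations of $\langle K^n,\mathcal{A}^n\rangle$, which transfer to $\langle K,\mathcal{A}\rangle$ through the normal-form equivalence; the policy transformation $\mathit{PTR}^\mathit{cred}$ together with the goal rule and the inconsistency test (\ref{eqn:inconsistencyinsert}) turn ``skeptical anti-explanation of all negative observations $O^-_i$'' into ``skeptical explanation of the single positive observation $O^+$'', as sketched in Section~\ref{sec:groundobservations}. Since the atoms $O^-_i,O^+$ introduced by $\mathit{PTR}^\mathit{cred}$ are fresh and occur in no rule head of $K\cup\prior$, a splitting/conservativity argument shows that the answer sets of $(K\setminus F)\cup E\cup\prior\cup\mathit{PTR}^\mathit{cred}$ are precisely the answer sets of $(K\setminus F)\cup E\cup\prior$, each extended by the uniquely determined $O^-_i/O^+$ atoms; hence $O^+$ holds in every answer set iff no answer set of $(K\setminus F)\cup E\cup\prior$ satisfies any ground instance of any $C_i\in\policy$, i.e. $\mathit{cred}((K\setminus F)\cup E\cup\prior,C_i(X))=\emptyset$ for all $i$. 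Together with the consistency clause of a skeptical explanation this is exactly confidentiality-preservation of $K^\mathit{pub}=(K\setminus F)\cup E$ with respect to \prior.

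The substantial step is the subset-minimality equivalence, and here the dependency graph enters. The implication ``$(E,F)$ from a U-minimal answer set $\Rightarrow$ $K^\mathit{pub}$ changes $K$ subset-minimally'' requires that no confidentiality-preserving ${K^\mathit{pub}}'$ has a strictly smaller symmetric difference from $K$ --- including candidates that delete a rule whose head misses $\mathcal{P}$ or insert a rule with a non-empty body, none of which lie in $\mathcal{A}$. I would prove this via a \emph{relevance lemma}: a suitable splitting set $U$ containing $\mathcal{P}\cup\{O^-_i,O^+\}$ (the construction $\mathcal{P}=\bigcup_j P_j$ being head-closed so that every rule of $K\cup\prior\cup\mathit{PTR}^\mathit{cred}$ with head in $U$ has all its literals in $U$) splits the program, so that for any EDP over the language of $K\cup\prior\cup\mathit{PTR}^\mathit{cred}$ the answer sets decompose into a ``bottom'' determined by the rules with head in $U$ --- which, by construction of $\mathcal{A}=\mathcal{P}\cup\{R\in K\mid\mathit{head}(R)\cap\mathcal{P}\neq\emptyset\}$, is built entirely from abducibles and the non-abducible rules below $\mathcal{P}$ --- and a ``top'' on which whether some answer set satisfies an instance of any $C_i$ does not depend. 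Consequently, from any confidentiality-preserving ${K^\mathit{pub}}'$ one may discard all changes lying outside $\mathcal{A}$ without destroying confidentiality-preservation; hence a subset-minimal one has its symmetric difference from $K$ contained in $\mathcal{A}$, i.e. it is of the form $(K\setminus F')\cup E'$ with $E'\subseteq\mathcal{A}\setminus K$ and $F'\subseteq\mathcal{A}\cap K$. On that restricted search space U-minimality of $S$ (which minimizes $|S\cap\mathcal{UA}|=|E|+|F|$, hence minimizes $E\cup F$ under $\subseteq$) coincides with subset-minimality of the change, and both directions of the ``iff'' follow, the converse exactly as in Theorem~\ref{theo:delete}.

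The hard part will be making this relevance/splitting lemma rigorous under disjunction and negation as failure, where adding or deleting a rule can simultaneously destroy and create answer sets, so that ``a change outside $\mathcal{A}$ cannot help'' is not self-evident. The plan is to invoke the splitting set theorem for disjunctive logic programs and to check carefully that the iterative construction really does capture the whole bottom program relevant to every $C_i$ --- in particular that no inserted rule whose head literal feeds, directly or transitively, into some $C_i$ can be overlooked, since any such literal would already have been collected into some $P_{j+1}$ and hence into $\mathcal{P}$, and that sibling head literals of abducible rules are included as well. Once the lemma is in place, the remainder is bookkeeping identical to the proof of Theorem~\ref{theo:delete}.
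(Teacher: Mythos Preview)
Your proposal follows essentially the same architecture as the paper's own proof, which is only a two-sentence sketch: the paper merely asserts that ``for subset-minimal change, only these literals are relevant for insertions; inserting other literals will lead to non-minimal change'' and then invokes Theorem~\ref{theo:delete}. Your plan to establish this relevance claim via a splitting-set argument is a genuine elaboration that the paper does not supply; in that sense your proposal is strictly more detailed than the published sketch while remaining on the same line of reasoning.

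One caveat worth flagging: your final paragraph asserts that ``sibling head literals of abducible rules are included as well'' in $\mathcal{P}$, but the iterative construction $P_{j+1}$ collects only \emph{body} literals of rules whose head meets $P_j$, not the remaining head disjuncts. So a literal $L_2$ appearing only as a co-head in some rule $L_1;L_2\leftarrow\Gamma$ with $L_1\in\mathcal{P}$ need not enter $\mathcal{P}$, yet inserting $L_2$ as a fact can alter which answer sets contain $L_1$. Your splitting set $U$ built from $\mathcal{P}$ will therefore not in general be closed under heads of the bottom program, and the splitting theorem for disjunctive programs requires exactly that. This is not a divergence from the paper---the paper's sketch does not address it either---but it is a real obstacle to the rigorous version you are aiming for, and you should expect to need a more careful closure (e.g.\ iterating over co-heads as well) or a different argument for why such insertions never improve minimality.
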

\begin{proof}\emph{(Sketch)}
In $\mathit{UP}$, positive update atoms from $\mathcal{UA}^+$ occur for literals on which the negative observations depend.
For subset-minimal change, only these literals are relevant for insertions; inserting other literals will lead to non-minimal change. 
In analogy to Theorem~\ref{theo:delete}, by the properties of minimal skeptical (anti-)explanations that correspond to U-minimal answer sets of an update program, we obtain a confidentiality-preserving  $K^\mathit{pub}$ with minimal change.
\end{proof}

\section{Discussion and Conclusion}\label{sec:conclusion}
This article showed that when publishing a logic program, confidentiality-preservation can be ensured by extended abduction; more precisely, we showed that under the credulous query response it reduces to finding skeptical anti-explanations with update programs.
This is an application of data modification, because a user can be mislead by the published knowledge base to believe incorrect information; we hence apply dishonesties \cite{sakama2011dishonestreasoning} as a security mechanism. This is in contrast to \cite{zhao2010usingasp} whose aim is to avoid incorrect deductions while enforcing access control on a knowledge base. Another difference to \cite{zhao2010usingasp} is that they do not allow disjunctions in rule heads; hence, to the best of our knowledge this article is the first one to handle a confidentiality problem for EDPs.
In \cite{bonatti1995foundations} the authors study databases that may provide users with 
incorrect answers
to preserve security in a multi-user environment.  Different from our 
approach, they consider a database as a set of formulas of propositional logic and 
formulate the problem using modal logic.
In analogy to \cite{sakama2003abductive}, a complexity analysis for our approach can be achieved by reduction of extended abduction to normal abduction.
Work in progress covers data publishing for skeptical users; future work might handle insertion of non-literal rules.
\bibliographystyle{plain}
\bibliography{bib}

\begin{thebibliography}{10}

\bibitem{afrati2009repair}
Foto~N. Afrati and Phokion~G. Kolaitis.
\newblock Repair checking in inconsistent databases: algorithms and complexity.
\newblock In {\em ICDT2009}, volume 361 of {\em ACM International Conference
  Proceeding Series}, pages 31--41. ACM, 2009.

\bibitem{biskup2010usability}
Joachim Biskup.
\newblock Usability confinement of server reactions: Maintaining
  inference-proof client views by controlled interaction execution.
\newblock In {\em DNIS 2010}, volume 5999 of {\em LNCS}, pages 80--106.
  Springer, 2010.

\bibitem{bonatti1995foundations}
Piero~A. Bonatti, Sarit Kraus, and V.~S. Subrahmanian.
\newblock Foundations of secure deductive databases.
\newblock {\em IEEE Trans. Knowl. Data Eng.}, 7(3):406--422, 1995.

\bibitem{calimeri2011aspcompetition}
Francesco Calimeri, Giovambattista Ianni, Francesco Ricca, Mario Alviano,
  Annamaria Bria, Gelsomina Catalano, Susanna Cozza, Wolfgang Faber, Onofrio
  Febbraro, Nicola Leone, Marco Manna, Alessandra Martello, Claudio Panetta,
  Simona Perri, Kristian Reale, Maria~Carmela Santoro, Marco Sirianni, Giorgio
  Terracina, and Pierfrancesco Veltri.
\newblock The third answer set programming competition: Preliminary report of
  the system competition track.
\newblock In {\em LPNMR 2011}, volume 6645 of {\em LNCS}, pages 388--403.
  Springer, 2011.

\bibitem{dix2005relationship}
J{\"u}rgen Dix, Wolfgang Faber, and V.~S. Subrahmanian.
\newblock The relationship between reasoning about privacy and default logics.
\newblock In {\em LPAR 2005}, volume 3835 of {\em Lecture Notes in Computer
  Science}, pages 637--650. Springer, 2005.

\bibitem{farkas2002inferenceproblem}
Csilla Farkas and Sushil Jajodia.
\newblock The inference problem: A survey.
\newblock {\em SIGKDD Explorations}, 4(2):6--11, 2002.

\bibitem{gelfond1991classical}
Michael Gelfond and Vladimir Lifschitz.
\newblock Classical negation in logic programs and disjunctive databases.
\newblock {\em New Generation Computing}, 9(3/4):365--386, 1991.

\bibitem{cuencagrau2008privacy-preserving}
Bernardo~Cuenca Grau and Ian Horrocks.
\newblock Privacy-preserving query answering in logic-based information
  systems.
\newblock In {\em ECAI2008}, volume 178 of {\em Frontiers in Artificial
  Intelligence and Applications}, pages 40--44. IOS Press, 2008.

\bibitem{inoue1995abductive}
Katsumi Inoue and Chiaki Sakama.
\newblock Abductive framework for nonmonotonic theory change.
\newblock In {\em Fourteenth International Joint Conference on Artificial
  Intelligence (IJCAI 95)}, volume~1, pages 204--210. Morgan Kaufmann, 1995.

\bibitem{ma2011multiagent}
Jiefei Ma, Alessandra Russo, Krysia Broda, and Emil Lupu.
\newblock Multi-agent confidential abductive reasoning.
\newblock In {\em ICLP (Technical Communications)}, volume~11 of {\em LIPIcs},
  pages 175--186. Schloss Dagstuhl - Leibniz-Zentrum f{\"u}r Informatik, 2011.

\bibitem{sakama2011dishonestreasoning}
Chiaki Sakama.
\newblock Dishonest reasoning by abduction.
\newblock In {\em 22nd International Joint Conference on Artificial
  Intelligence (IJCAI 2011)}, pages 1063--1064. IJCAI/AAAI, 2011.

\bibitem{sakama2003abductive}
Chiaki Sakama and Katsumi Inoue.
\newblock An abductive framework for computing knowledge base updates.
\newblock {\em Theory and Practice of Logic Programming}, 3(6):671--713, 2003.

\bibitem{stouppa2009data}
Phiniki Stouppa and Thomas Studer.
\newblock Data privacy for knowledge bases.
\newblock In Sergei~N. Art{\"e}mov and Anil Nerode, editors, {\em LFCS2009},
  volume 5407 of {\em LNCS}, pages 409--421. Springer, 2009.

\bibitem{toland2010inferenceproblem}
Tyrone~S. Toland, Csilla Farkas, and Caroline~M. Eastman.
\newblock The inference problem: Maintaining maximal availability in the
  presence of database updates.
\newblock {\em Computers {\&} Security}, 29(1):88--103, 2010.

\bibitem{wiese2010horizontal}
Lena Wiese.
\newblock Horizontal fragmentation for data outsourcing with formula-based
  confidentiality constraints.
\newblock In {\em IWSEC 2010}, volume 6434 of {\em LNCS}, pages 101--116.
  Springer, 2010.

\bibitem{zhao2010usingasp}
Lingzhong Zhao, Junyan Qian, Liang Chang, and Guoyong Cai.
\newblock Using {ASP} for knowledge management with user authorization.
\newblock {\em Data \&{} Knowl. Eng.}, 69(8):737--762, 2010.

\end{thebibliography}
\end{document}